\theoremstyle{definition}
\newtheorem{none}{Part}[section] 
\newtheorem{definition}{Definition}[section]
\newtheorem{remark}[definition]{Remark}
\newtheorem{proposition}[definition]{Proposition}
\newcommand\R{\mathbb{R}}     %Real numbers
  \renewcommand\C{\mathbb{C}} %renewcommand using xelatex
  \newcommand\C{\mathbb{C}}
\newcommand\D{\mathbb{D}}
\newcommand\PS{\mathbb{P}}
\newcommand\SE{\operatorname{SE}(3)}
\newcommand\mbf{\mathbf}
\renewcommand\H{\mathbb{H}}
\begin{document}

\title{Inverse Kinematics of Some General 6R/P Manipulators}

\author[J.~Capco]{Jose Capco}
\address[J.~Capco]{Research Institute for Symbolic Computation, Johannes Kepler University, Austria}
\email{jcapco@risc.jku.at}

\author[M.J.C.~Loquias]{Manuel Joseph C.~Loquias}
\address[M.J.C.~Loquias]{Institute of Mathematics, College of Science, University of the Philippines Diliman, Quezon City 1101, Philippines}
\email{mjcloquias@math.upd.edu.ph}

\author[S.M.M.~Manongsong]{Saraleen Mae M.~Manongsong}
\address[S.M.M.~Manongsong]{Institute of Mathematics, College of Science, University of the Philippines Diliman, Quezon City 1101, Philippines}
\email{smmanongsong@gmail.com}

\author[F.R.~Nemenzo]{Fidel R.~Nemenzo}
\address[F.R.~Nemenzo]{Institute of Mathematics, College of Science, University of the Philippines Diliman, Quezon City 1101, Philippines}
\email{fidel@math.upd.edu.ph}

\begin{abstract}
	We develop an algorithm that solves the inverse kinematics of general serial 2RP3R, 2R2P2R, 
  3RP2R and 6R manipulators based 
  on the HuPf algorithm. We identify the workspaces of the 3-subchains of the manipulator with a quasi-projective 
  variety in $\PS^7$ via dual quaternions. This allows us to compute linear forms that describe linear spaces containing the 
  workspaces of these 3-subchains. We present numerical examples that illustrate the algorithm and show the real 
  solutions.
\end{abstract}

\subjclass[2010]{Primary 70B15; Secondary 53A17}

\keywords{Study quadric, serial manipulators, revolute joints, prismatic joints, inverse kinematics}

\date{\today}

\maketitle

  \section{Introduction}
    The inverse kinematic (IK) problem in computational kinematics involves finding joint values of a manipulator for a specified position and orientation of its end-effector tool. The development of computer algebra systems led to notable improvements of obtaining solutions to this problem.  Chapelle and Bidaud \cite{chap} obtained a mathematical function that approximates joint values of general 6R manipulators through genetic programming. The same manipulator was studied by Wang et al.~\cite{wang} using Gröbner bases. The works of Husty et al.~\cite{ik6r} and Pfurner \cite{pfurner} gave an algebraic-geometric insight to the problem and addressed it using classical results in multi-dimensional geometry. Gan et al.~\cite{gan} solved the problem for the case of a 7-link 7R mechanism using dual quaternions and Dixon's resultant. 
    
    Most of the developments mentioned above deal with manipulators having purely revolute joints, while manipulators with prismatic joints have not yet been fully explored. Joints of manipulators in the industry are mostly prismatic or revolute. We provide a solution to the IK problem of some serial manipulators that contain prismatic joints, in particular, 2RP3R, 2R2P2R, 
    3RP2R and 6R manipulators, using an approach similar to that of Husty and Pfurner \cite{ik6r,pfurner} but based on the algebra of dual quaternions. The solution turns out to be not as easy and straightforward as the case of 6R manipulators. 
    
    The rest of the paper is organized as follows. We start with some basic concepts in Section~2. Hyperplanes for the 2-chains are computed in Section~3, while linear spaces for the workspaces are computed in Section~4. The procedure for solving the IK problem is shown in Section~5 along with some examples, and conclusions are discussed in Section~6.
      
    \section{Preliminaries}
    
    \subsection{Manipulator structure} A serial 6-chain manipulator with 2RP3R structure is a sequence of seven links connected by six joints starting from the base: two revolute joints (2R), a prismatic joint (P) and three more revolute joints (3R). See Figure~\ref{fig:2rp3rwig}. Serial 6-chain manipulators with 2R2P2R, 3RP2R and 6R structures are defined analogously.
    
    \begin{figure}
      \centering
      \includegraphics[scale=0.4]{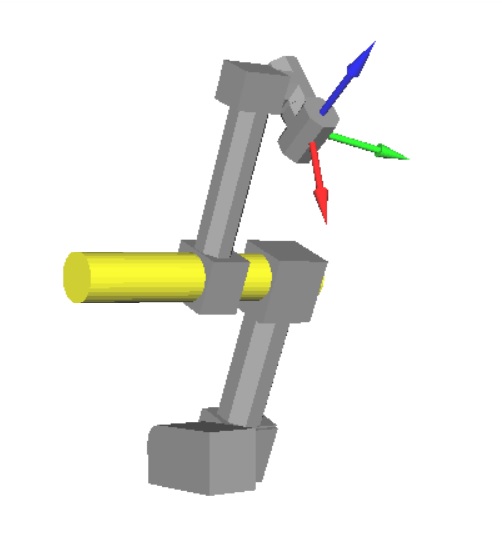}
      \caption{A 2RP3R Manipulator}
      \label{fig:2rp3rwig}
    \end{figure}
    
    For the analysis of this mechanism, we adopt the Denavit-Hartenberg (DH) convention on assigning coordinate frames \cite{spong}.  The base frame $F_1$ is associated to the base link in which the $z_1$-axis coincides with the first rotation axis, and the $x_1$- and $y_1$-axes are placed to form a right-handed coordinate frame. For $i=2,3,\ldots,6$, the $z_i$-axis is placed along the $i$th joint axis and the origin is at the intersection of the $z_i$-axis and the common normal between the $z_{i-1}$- and $z_i$-axes. The $x_i$-axis is set along the direction of this  normal and the $y_i$-axis is set to form a right-handed frame $F_i$. The end-effector frame $F_7$ (or EE frame) has its $z_7$-axis placed along the $z_6$-axis, and the $x_7$- and $y_7$-axes are placed to form a right-handed frame.

    The displacement between two consecutive frames $F_{i}$ and $F_{i+1}$ can be described using four parameters: \emph{rotation angle} $\theta_{i}$, \emph{offset} $d_{i}$, \emph{distance} $a_{i}$ and \emph{twist angle} $\alpha_{i}$, where $i=1,2,\ldots,6$. However, rotation and twist angles can be parametrized by points in the real projective space $\PS^1(\R)$. For simplicity we 
    will disregard half-rotations (or rotations that are odd multiples of $\pi$) which allows us to parametrize the angles by 
    half-angle tangents. That is, we use the parameters 
    $l_i := \tan\tfrac{\alpha_i}{2}$ and $v_i:= \tan\tfrac{\theta_i}{2} $
    which gives an algebraic description of the manipulator's workspace.

    We refer to $v_1,v_2,d_3,v_4,v_5$ and $v_6$ as the \textit{joint variables} and the rest of the parameters as the \textit{DH-parameters} of a 2RP3R manipulator. These variables and parameters are  referred analogously for 2R2P2R, 3RP2R and 6R manipulators. Note that $a_6=l_6=0$. We assume $d_1=d_6=0$.

    Rigid transformations in $\SE$ are usually represented by homogeneous transformation matrices. In particular, the displacement between frames $F_{i}$ and $F_{i+1}$ consists of a rotation by $\theta_i$, a translation $(a_i,0,d_i)$ and a rotation by $\alpha_i$, and is represented by the matrix product
    $$B_i=\left(
    \begin{array}{cccc}
      \cos(\theta_i) & -\sin(\theta_i) & 0 & 0 \\
      \sin(\theta_i) & \cos(\theta_i) & 0 & 0 \\
      0 & 0 & 1 & 0 \\
      0 & 0 & 0 & 1 \\
    \end{array}
    \right) \left(
    \begin{array}{cccc}
    1 & 0 & 0 & a_i \\
    0 & \cos(\alpha_i) & -\sin(\alpha_i) & 0 \\
    0 & \sin(\alpha_i) & \cos(\alpha_i) &  d_i \\
    0 & 0 & 0 & 1 \\
    \end{array}
    \right)$$
    for $i=1,2,\ldots,6.$ The kinematic equation of a 6-chain manipulator can be formulated as
    \begin{equation}
      B_1 B_2 \cdots B_6 = E \label{eq:ke}
    \end{equation}
    where $E$ is the matrix representing the displacement of the EE frame relative to the base frame. 
    
    Given the DH-parameters of a manipulator and a specified pose of its end-effector $E$, the inverse kinematics problem aims to find values of its joint variables for which equation~\eqref{eq:ke} holds. In general, it is difficult to solve this problem with trigonometric functions because the system of equations are nonlinear. To overcome this hurdle, the equations may be expressed algebraically via the kinematic map and tangent of half-angle substitutions (see the works of Husty~\cite{ik6r} and Pfurner~\cite{pfurner}). This allows one to express each rigid transformation $B_i$  above as points in the projective space $\PS^7(\R)$. Moreover, products of rigid transformations are obtained by identifying them as dual quaternions. Indeed, the algebra of dual quaternions is enough to directly obtain algebraic equations for solving the IK problem.
    
    \subsection{Dual quaternions}
    We write \textit{quaternions} as $ p = p_0+\mbf{p} $ where $p_0 \in \R$ and $\mathbf{p} \in \R^3$. The set of quaternions $\H$ is an algebra over $\R$ where addition and multiplication are given by
    \begin{equation*}\begin{split}
        p_0+\mbf{p} + q_0+\mbf{q}  &= p_0 +q_0 + \mathbf{p+q}, \textnormal{ and }\\
        (p_0+\mbf{p}) (q_0+\mbf{q}) &=  p_0 q_0 - \mathbf{p\cdot q}+ (p_0 \mathbf{q} + q_0 \mathbf{p} + \mathbf{p} \times \mathbf{q})
    \end{split} \end{equation*}
    respectively, for any $p_0+\mbf{p},q_0+\mbf{q}\in\H$, and where $\cdot$ is the usual inner product in $\R^3$. Moreover, $\lambda (p_0+\mbf{p}) =(p_0+\mbf{p})\lambda= \lambda p_0+\lambda\mbf{p}$ for any $\lambda \in \R$. The real numbers can be embedded in $\H$ via the map $r\mapsto r+\mbf{0}$. 
    
    The $\R$-algebra of \emph{dual quaternions}, denoted $\D$, consists pairs of quaternions that are written as a formal sum $\sigma = p + \epsilon q$ where  $p\in \H$  and $q \in \H$. Operations in $\D$ are as follows:
    \begin{align*}
    (p + \epsilon q) + (s + \epsilon t) &= (p+s) + \epsilon (q+t)\\
    (p + \epsilon q) (s + \epsilon t) &= ps + \epsilon (pt+qs)\\
    \lambda(p + \epsilon q) &= \lambda p + \epsilon \lambda q
    \end{align*}
    for any $p + \epsilon q$, $s + \epsilon t \in \D$ and $\lambda\in \R$. Here, we take $\epsilon^2=0$ and $\epsilon\neq0$. Note that $\H$ can be embedded in $\D$ via the map $p\mapsto p+\epsilon 0$.  
       
    The \emph{conjugate} of a quaternion $p=p_0+\mathbf{p}$ is $p^*:=p_0-\mathbf{p}$, and the conjugate of a dual quaternion $\sigma=p + \epsilon q=(p_0+\mathbf{p})+\epsilon(q_0+\mathbf{q})$ is $\sigma^* := p^* + \epsilon q^*$. It can be verified that $\sigma\sigma^*=\sigma^*\sigma$ is a nonnegative real number if and only if $p_0q_0+\mbf p \cdot \mbf q =0$. In this case, set $|\sigma|:=\sigma\sigma^*$.
    
    We consider the set
    $$\D_s:=\left\{\sigma\in\D \mid \sigma=p + \epsilon q \textnormal{ with } p\neq0 \textnormal{ and } \sigma\sigma^*\in \R_{\geq0}\right\}$$ 
    which is a subgroup of the group of units of $\D$. The 
    multiplicative inverse of $\sigma\in \D_s$ is $\sigma^{-1}=\tfrac{\sigma^*}{|\sigma|}$.
    
    Quaternions can be used to describe rotations in $\R^3$. For instance, a rotation by $\theta$ about a unit axis $\mbf{n}\in\R^3$ through the origin can be represented by a quaternion $\cos\tfrac{\theta}{2}+\sin\tfrac{\theta}{2}\mbf{n}$. On the other hand, dual quaternions can be used to describe rigid transformations in $\R^3$ \cite{selig}. For instance, the rigid transformation  determined by a rotation $p\in\H$ and a translation $t=0+\mbf{t}\in\H$ can be represented by the dual quaternion
    $$
    	\sigma=p+\epsilon \tfrac{1}{2}tp=(\cos\tfrac{\theta}{2}+\sin\tfrac{\theta}{2}\mbf{n})+\epsilon (-\tfrac{1}{2}\sin \tfrac{\theta}{2} \mbf{t}\cdot\mbf{n}+ \tfrac{1}{2}\cos \tfrac{\theta}{2}\mbf{t}+\tfrac{1}{2}\sin \tfrac{\theta}{2} (\mbf t\times\mbf n)).\label{eq:dqSE}
    $$
    \noindent 
	Naive storage and multiplication of homogenous transformation matrices is less efficient than 
	dual quaternions. Naive matrix multiplication involves 122 elementary operations, while naive 
	multiplication of dual quaternions requires 88 elementary operations. There are other 
	advantages of dual quaternions over matrices such as in interpolations and handling numerical 
	errors (see \cite{markley}).

The kinematic equation in \eqref{eq:ke} may now then be expressed as
    \begin{equation*}
    	\sigma_1\sigma_2\cdots \sigma_6=\sigma_E \label{eq:kineq}
    \end{equation*}
where $\sigma_E$ is a dual quaternion representing the displacement of the EE frame $F_7$ relative to the base frame $F_1$. The DH-parameters and joint variables of the manipulator are encoded in each $\sigma_i$. To solve the IK problem, we decompose the 6-chain into two parts -- the \textit{left} and \textit{right chains} -- consisting of transformations that correspond to the left and right sides of the equation    
     \begin{equation}
     \sigma_1\sigma_2\sigma_3=\sigma_E\sigma_6^{-1}\sigma_5^{-1}\sigma_4^{-1} \label{eq:kineqdec}
    \end{equation} 
    respectively. Hence, the problem is reduced to solving for joint variables in equation~\eqref{eq:kineqdec} for which frame $F_4$ on the left and on the right chains coincide. 

    \subsection{Workspaces} 

   We identify elements in $\SE$ with points in the real projective space $\PS^7(\R)$ via dual quaternions of the form $\sigma=p+\epsilon\tfrac{1}{2}tp$. Nonzero scalar multiples of $\sigma$ represent the same elements in $\SE$. 
    
    Writing $\sigma=(x_0+\mbf{x})+\epsilon(y_0+\mbf{y})$ where $\mbf{x}=(x_1,x_2,x_3)$ and $\mbf{y}=(y_1,y_2,y_3)$ as an 8-tuple, it can be verified that $\sigma$ is in the set   
    $$\{(x_0:x_1:\cdots:y_3)\in\mathbb{P}^7(\R) \,\big{|}\, \sum_{i=0}^3 x_iy_i =0 \text{ and } \sum_{i=0}^3 x_i^2 \ne 0\}.$$
   We call $(x_0:x_1:\dots:y_3)$ as the \textit{Study parameters} of the rigid transformation. Thus we may also identify $\SE$ with the above subset of $\mathbb{P}^7(\R)$. In this identification, $\SE$ is the intersection of the Study quadric $S$
    \begin{equation}
    x_0y_0 + x_1y_1+x_2y_2 +x_3y_3=0 \label{eq:study}
    \end{equation}
     with the complement of the linear space 
    $x_0=x_1=x_2=x_3=0$ in $\mathbb{P}^7(\R)$.  Recall that a linear space is the intersection of a number of hyperplanes. 
    
    Consider the frame $F_4$ of a 2RP3R chain. Given the DH-parameters and a pose of the end-effector, the set of all possible poses of 
    $F_4$ relative to $F_1$ on the left 2RP chain is called the \textit{left workspace}, 
    while all possible poses for $F_4$ relative to $F_1$ on the right 3R chain is called the 
    \textit{right workspace}. These two workspaces in  $\SE$  are obviously the same and 
    described by the given DH-parameters.
    
    Just as in \cite{pfurner}, we build upon Selig's theory \cite{selig} that the workspace of a 2-chain lies in a linear 3-space to describe the left and right workspaces as intersections of hyperplanes in $\PS^7(\R)$ and $\SE$. The hyperplane equations then serve as constraints of these workspaces. To solve the IK problem we regard the workspace of a 3-chain as the workspace of a 2-chain parametrized by a joint. For instance in a 2RP3R manipulator, the 
    left workspace could be the kinematic image of a 2R-chain parametrized by $d_3$ while the right workspace is the 
    kinematic image of a 2R-chain parametrized by say $v_6$. 
    
    Some advantages of looking at the workspace in the projective setting are (1) it allows us to geometrically analyze dimensions of intersection of hyperplanes before solving the IK problem,  and (2) we solve IK only using basic techniques in linear algebra and algebraic geometry \cite{ik6r}. 
    
    \section{Computing hyperplanes for the left chain}
    
    Let $\sigma_i=R_{z}(v_i)T_{z}(d_i) T_{x}(a_i) R_{x}(l_i)$ in $\SE$, where $R_z$ and $R_x$ are rotations about the $z$- and  $x$-axes, and $T_z$ and $T_x$ are translations along $z$- and $x$-axes, respectively. Note that $\sigma_i=\sigma_i(v_i)$ (a function of $v_i$) if the $i$th joint is revolute, and $\sigma_i=\sigma_i(d_i)$ if the $i$th joint is prismatic. Along the same axis, note also that $R_{z}$ and $T_{z}$ commute; likewise, $R_{x}$ and $T_{x}$ commute. 
    
    We want to get the  linear forms parametrized by joint $v_1$ which define the linear spaces that contain the kinematic image of the left chain
    $$V_L:=\{\sigma_1(v_1)\sigma_2(v_2) \sigma_3(d_3) \mid v_1,v_2,d_3 \in \R \}.$$ To this end, we first compute the linear space that contains the kinematic image (leaving out the first joint)
    $$V_1:=\{R_{z}(v_2) T_{x}(a_2) R_{x}(l_2)  T_{z}(d_3) \mid v_1,d_3\in \R \}$$
    where the fixed transformations $\sigma_1 T_{z}(d_2)$ and $R_{z}(v_3) T_{x}(a_3) R_{x}(l_3)$ are removed (recall that we assume $d_1=0$ and, except for the joint variables $v_1,v_2,d_3$, the other DH-parameters are fixed). Our aim is to find hyperplanes     
    $$ax_0+bx_1+cx_2+dx_3+ey_0+fy_1+gy_2+hy_3=0$$    
    in $\PS^7(\R)$ whose intersection contain $V_1$. To do this, we substitute the Study parameters of $V_1$  into this equation thereby obtaining a polynomial  equation in $v_2$ and $d_3$
    $$z_0v_2d_3+z_1v_2+z_2d_3+z_3=0$$
    with coefficients
    \begin{center}
      \begin{tabular}{l}
        $z_0=-4 e+4l_2f$\\        
        $z_1=8l_2c+8 d+4 a_2 g-4 a_2l_2 h$\\
        $z_2=-4l_2g+4h$\\
        $z_3=8 a+8 l_2b-4 a_2l_2e+4 a_2 f.$
      \end{tabular}
    \end{center}
    
    \noindent Since $z_i=0$ for $i\in\left\{0,1,2,3\right\}$, the system of equations above can be written in the following matrix form:

    $$\left(
    \begin{array}{cccccccc}
    0 & 0 & 0 & 0 & -2 & 2 l_2 & 0 & 0 \\
    0 & 0 & 4 l_2 & 4 & 0 & 0 & 2 a_2 & -2 a_2 l_2 \\
    0 & 0 & 0 & 0 & 0 & 0 & -2 l_2 & 2 \\
    4 & 4 l_2 & 0 & 0 & -2 a_2 l_2 & 2 a_2 & 0 & 0 \\
    \end{array}
    \right) \left(\begin{array}{c}
    a\\ b \\ \vdots \\ h
    \end{array}\right)=\left(\begin{array}{c}
    0\\ 0 \\ \vdots \\ 0
    \end{array}\right)$$
    
    \noindent Solving for the kernel of the above $4\times8$ coefficient matrix $A$, we get four solutions for $(a:b:\ldots:h)$ in $\PS^7(\R)$. Note that in computing for the kernel of $A$, we need to pay attention to entries of $A$ that involve the DH-parameters since they may assume the value of 0. Hence, the linear 3-space defined by
    \begin{align*}    
    -l_2 x_0+x_1&=0 \\
    a_2 \left(l_2^2-1\right) x_0+2l_2 y_0+2y_1&=0 \\
    a_2 \left(l_2^2-1\right) x_3+2y_2+2l_2 y_3&=0 \\
    x_2-l_2 x_3&=0 
    \end{align*}
    contains $V_1$ and lies in $S$ if and only if $l_2=\pm1$.
    
   	From the above linear space, we need to get the linear 3-space parametrized by joint $v_1$ that contain $V_L$, which we denote by $T(v_1)$. This is done by the following change of variables (operations are dual quaternion arithmetic and we view $(x_0:x_1:\dots : y_3)$ on the right as a dual quaternion)
    \begin{equation}\label{full_linear_space}
    (x_0:x_1:\dots : y_3) \rightarrow (\sigma_1(v_1)T_z(d_2))^{-1}(x_0:x_1:\dots:y_3)(R_z(v_3) T_x(a_3) R_x(l_3))^{-1}
    \end{equation}    
    where the inverse transformations are represented by the dual quaternion conjugates.
    
    Similar procedures can be applied to obtain the linear space parametrized by $d_3$ that also contain $V_L$, denoted $T(d_3)$. Here, the linear space that contain the kinematic image (leaving out the third joint)
	$$V_3:=\{R_{z}(v_1) T_{x}(a_1) R_{x}(l_1) R_{z}(v_2) \mid v_1,v_2\in\R \}$$ 
    is defined by
    \begin{equation}\begin{split}
    a_1 l_1 x_0 +2y_0 &= 0    \\
    -a_1 x_1 +2l_1 y_1 &=0  \\
    -a_1 x_2 +2l_1 y_2 &= 0 \\
    a_1 l_1 x_3 +2y_3 &= 0
    \end{split}\label{eq:left3R}\end{equation}
    when $a_1$ and $l_1$ are non-zero. If $a_1=l_1=0$ then we have the projective line   
    $$x_1=x_2=y_0=y_1=y_2=y_3=0.$$
The linear 3-space defined by equations~\eqref{eq:left3R} lies inside $S$ if and only if either $a_1$ or $l_1$ is 0. 
The linear space parametrized by $d_3$, $T(d_3)$, is then obtained by accounting for the fixed transformations and the parametrizing joint variable $d_3$, i.e. by applying the following change of variables
$$(x_0:x_1:\dots : y_3) \rightarrow (x_0:x_1:\dots:y_3)(T_z(d_2) T_x(a_2) R_x(l_2)\sigma_3(d_3))^{-1}.$$
The DH parameter values for which both $T(v_1)$ and $T(d_3)$ lie in $S$ are 
$$\{a_1=0 \textnormal{ or } l_1=0\} \textnormal{ and } l_2=\pm1$$
For convenience, we will assume that the given DH parameters do not satisfy these values so that $T(v_1)$ or $T(d_3)$ 
is not contained in $S$. When this assumption is not true, we still need to compute for $T(v_2)$ and the DH-parameter values for which $T(v_2)$ lies in $S$. To save space, we have not included this case in this paper. A detailed discussion can be found in \cite{manongsong}.
    
\vspace{5mm}
The procedures described in this section can be applied analogously to compute linear spaces that contain the 
kinematic image of the 3R joint type with joint variables $v_1,v_2$ and $v_3$. We only need to verify that the 
linear space containing
$$V_1:=\{R_{z}(v_2) T_{x}(a_2) R_{x}(l_2) R_{z}(v_3) \mid v_2,v_3\in\R\}$$ 
is the set of vanishing points of linear forms similar to
equation~\eqref{eq:left3R} but by replacing $a_1$ with $a_2$ and 
$l_1$ with $l_2$ (we assume $a_2$ and $l_2$ are non-zero). For RRR, the kinematic image $V_3$ is the same as for 
RRP, so equation~\eqref{eq:left3R} also describes the linear 3-space containing $V_3$. To obtain $T(v_1)$ or $T(v_3)$ we 
account for the fixed transformation and the parametrizing joint variables by substituting variables similar to equation~\eqref{full_linear_space}. This is also described in \cite{pfurner}. For simplicity we will assume that $a_6=d_6=l_6=0$ (otherwise the linear space is another 
easy change of variables taking these fixed transformations into account).

 \section{Computing hyperplanes for the right chain}

Recall that $\sigma_E$ is the pose of the end-effector. In order to obtain parametrized linear spaces that contain the kinematic image of the right chain
$$V_R:=\{\sigma_E\sigma_6^{-1}(v_6)\sigma_5^{-1}(v_5) \sigma_4^{-1}(v_4) \mid v_4,v_5,v_6 \in \R \}$$ 
we do the following steps.
\begin{enumerate}[1.]
	\item Consider the ``reverse joint type" of the right chain. For instance, in a 3RP2R manipulator the reverse joint type of the right chain is RRP. 
	\item Depending on the joint, obtain parametrized linear spaces $T(v_i)$, $i=1,3$, that contain its workspace.
	\item Apply the following parameter substitutions 
	\begin{equation}\begin{aligned}
		&v_1\rightarrow-v_6,\quad a_1\rightarrow-a_5,\quad l_1\rightarrow-l_5\\
		&v_2\rightarrow-v_5,\quad a_2\rightarrow-a_4,\quad l_2\rightarrow-l_4,\quad d_2\rightarrow-d_5\\
		&v_3\rightarrow-v_4, \quad a_3\rightarrow 0, \qquad\, l_3\rightarrow 0,\quad\quad d_3\rightarrow-d_4.\\
	\end{aligned}\label{eq:substitutions}\end{equation}
	
	\item Perform a change of variables
	$$(x_0:x_1:\dots : y_3) \rightarrow \sigma_E^* (x_0:x_1:\cdots:y_3)$$
\end{enumerate}

In a 2RP3R manipulator, the parametrized linear spaces $T(v_4)$ and $T(v_6)$ obtained from $T(v_3)$ and $T(v_1)$, respectively, each contain $V_R$. Moreover $T(v_4)$ lies in $S$ if and only if $a_5$ or $l_5$ is 0, while $T(v_6)$ lies in $S$ if and only if $a_4$ or $l_4$ is 0. For convenience, we assume that the given DH parameters do not satisfy $$\{a_4=0 \textnormal{ or } l_4=0\} \textnormal{ and } \{a_5=0 \textnormal{ or } l_5=0\}$$
so that neither $T(v_4)$ nor $T(v_6)$ is contained in $S$. It is also possible to compute $T(v_5)$, but we do not include this case in this work (for more details, see \cite{manongsong}).

Obtaining these parametrized linear spaces is an important step in the HuPf algorithm to solve the inverse kinematics
problem for 2RP3R manipulators. In this setting and with the information that we have now obtained, the IK problem reduces to a linear algebra problem. This will become clear in the next section.

\section{Solving the IK problem of 2RP3R manipulators}

The main idea for the inverse kinematics of 2RP3R manipulators is to obtain points in the intersection of left and right workspaces 
in a suitable ambient space (see proof of Proposition~\ref{prop:finite}). This entails  solving a system of nine constraint equations: 
eight linear forms and the Study equation. For the proposed IK procedure to work, both parametrized linear spaces must not lie in $S$. This means we need to check the given DH parameters as follows: if $l_2\neq\pm1$, use $T(v_1)$, otherwise, use $T(d_3)$; if $(a_4,l_4)\neq(0,0),$ use $T(v_6)$, otherwise, use $T(v_4)$. Let $T(u)$ and $T(w)$ be the chosen parametrized linear spaces for the left and 
right workspaces, respectively (so $u\in\left\{v_1,d_3\right\}$ and $w\in\left\{v_4,v_6\right\}$). In the end, we want to obtain 
finite complex solutions (thus also real solutions will be finite) to the problem.  We argue using 
complex solutions so that we can use 
basic results in classical algebraic geometry rather than rely on more sophisticated results in real algebraic geometry.

\begin{proposition}\label{prop:finite}
	Given the DH-parameters of a general 2RP3R manipulator. If its set of complex IK solutions is finite, then the eight 
  hyperplanes from $T(u)$ and $T(w)$ must be in \textit{general position} i.e.\ the $8$ hyperplanes are described by 
  linear forms over $\C[u,w]$ such that coefficients of these linear forms are linearly independent vectors in an $8$-
  dimensional vector space over $\C(u,w)$.
  %whose coordinates form a linearly independent set in $\PS^7(\C(u,w))$.
  %, where $\C(u,w)$ is the field of rational functions in $u$ and $w$.\label{prop:finite}
\end{proposition}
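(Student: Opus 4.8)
The plan is to prove the contrapositive: if the eight hyperplanes from $T(u)$ and $T(w)$ are \emph{not} in general position, then the set of complex IK solutions is infinite. Here $T(u)$ contributes four linear forms in the Study coordinates $(x_0:\cdots:y_3)$ with coefficients depending polynomially on the left parameter $u$, and $T(w)$ contributes four more with coefficients depending on the right parameter $w$. A complex IK solution corresponds to a choice of $(u,w)$ together with a point in $\PS^7(\C)$ lying on all eight hyperplanes and on the Study quadric $S$, such that the left and right frames $F_4$ coincide; conversely, such a common point yields the remaining joint values by back-substitution. So the finiteness of the solution set is governed by the dimension of the intersection of these eight hyperplanes with $S$, viewed as the parameters $(u,w)$ vary.

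First I would set up the $8\times 8$ coefficient matrix $M(u,w)$ over $\C(u,w)$ whose rows are the coefficient vectors of the eight linear forms. The statement ``general position'' is exactly the condition that $M(u,w)$ is invertible over the field $\C(u,w)$, i.e.\ that $\det M(u,w)$ is not the zero element of $\C(u,w)$. So I would argue: if the rows are linearly dependent over $\C(u,w)$, then $\det M(u,w) \equiv 0$ as a rational function, hence for \emph{every} specialization $(u_0,w_0)\in\C^2$ outside the (proper) locus where denominators vanish, the eight hyperplanes fail to be independent. For such a generic $(u_0,w_0)$, the common intersection of the eight hyperplanes in $\PS^7(\C)$ has projective dimension at least one.

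Next I would intersect this positive-dimensional linear space with the Study quadric $S$. A linear space of projective dimension $\geq 1$ meets the quadric $S\subset\PS^7(\C)$ in a nonempty set (a quadric meets every line in $\PS^7(\C)$), and in fact a linear space of dimension $d\ge 1$ meets $S$ in a variety of dimension at least $d-1\ge 0$; one must only check that these intersection points genuinely lie in $\SE$, i.e.\ avoid the excluded locus $x_0=x_1=x_2=x_3=0$, which holds generically because $T(u)$ and $T(w)$ are built from honest rigid transformations. Allowing $(u_0,w_0)$ to range over the open dense set where the dependence persists then produces an infinite family of IK solutions, contradicting finiteness. I would assemble these into the conclusion that the solution set contains a subvariety of positive dimension, hence is infinite.

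The main obstacle is the last bookkeeping step: ensuring that the positive-dimensional intersections one obtains for generic $(u_0,w_0)$ are not spurious, i.e.\ that they correspond to bona fide IK solutions and are not all absorbed into the excluded linear space $x_0=\cdots=x_3=0$ or into coincidences forced by the constraint $p\neq 0$ defining $\D_s$. Handling this requires using the explicit form of the linear equations defining $V_1$ and $V_3$ from Sections~3 and~4 to verify that a generic solution point has $(x_0:x_1:x_2:x_3)\neq 0$, so that it represents a genuine element of $\SE$ and the frame-matching that defines an IK solution really does hold. The degenerate DH-parameter values where this fails have already been excluded by the running assumptions on $l_2$, $a_4$, $l_4$, $a_5$, $l_5$, which keep $T(u)$ and $T(w)$ off the Study quadric, so these assumptions are exactly what make the generic argument go through.
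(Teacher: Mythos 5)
Your contrapositive framing is in the same spirit as the paper's proof by contradiction, but there is a genuine gap at the pivotal dimension count. Linear dependence of the eight coefficient rows over $\C(u,w)$ only gives $\operatorname{rank} M(u_0,w_0)\le 7$ at generic specializations, so the common intersection of the eight hyperplanes in $\PS^7(\C)$ has projective dimension at least $7-7=0$ --- a point --- not at least one. Dimension $\ge 1$ would require rank $\le 6$, i.e.\ two independent dependencies, which failure of general position does not provide. Consequently your next step (``a linear space of projective dimension $\ge 1$ meets the quadric'') has nothing to apply to: the single point $P(u_0,w_0)$ can perfectly well miss $S$, and in the algorithm's generic situation this is exactly what happens --- $P(u,w)$ lies on $S$ only along the curve $f(u,w)=\sum_{i}x_i(u,w)y_i(u,w)=0$. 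So your fiberwise intersection with $S$ produces no IK solution for a fixed generic $(u_0,w_0)$, and the contrapositive does not close; one cannot even be sure from the affine picture that $f$ has any zeros at all, since a priori $f$ could be a nonzero constant.

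The repair is to globalize, which is what the paper does: regard $T(u)\cap T(w)$ as a subvariety $\mathcal W\subset \PS^7(\C)\times\PS^1(\C)\times\PS^1(\C)$. Failure of general position makes $\mathcal W$ dominate the two-dimensional parameter base, and finiteness of the IK solution set forces the fibers $X(\alpha,\beta)$ over the base to be finite (here your line-meets-quadric observation is genuinely useful: a positive-dimensional linear fiber would meet $S$ and already yield infinitely many solutions), so the projection of $\mathcal W$ to $\PS^7$ contains a projective \emph{surface}. Now the projective dimension theorem \cite[Chapter~I. Theorem~7.2]{harts} applies globally in $\PS^7$: the surface and the $6$-dimensional quadric $S$ meet in a nonempty set of dimension $\ge 2+6-7=1$, a curve, giving infinitely many solutions. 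Note that compactifying the parameters matters: the intersection guaranteed by the dimension theorem may occur over parameter values at infinity, which is precisely the case your affine family $\{P(u_0,w_0)\}$ would miss. Your final paragraph's attention to the excluded locus $x_0=x_1=x_2=x_3=0$ is a fair point that the paper leaves implicit, but it does not repair the dimension count, which is the load-bearing step.
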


\begin{proof}
We regard $T(u)$ and $T(w)$ as subvarieties of $\PS^7(\C)\times \PS^1(\C)\times \PS^1(\C)$ (i.e.\ a Segre subvariety, 
see \cite[\S2\, pp.27]{harris}) and denote their intersection by $\mathcal W$. We now prove by contradiction and assume 
that we cannot find eight hyperplanes from $T(u)$ and $T(w)$ in general position. In particular, $\mathcal W$ will 
contain a parametrized family
$$\bigcup_{\alpha,\beta\in \mathbb P^1} X(\alpha,\beta) \subset \PS^7\times \PS^1\times \PS^1$$
where $X(\alpha,\beta)$ is a finite set of points in $\PS^7$ belonging to a section of $\mathcal W$ with the linear 
space $\PS^7\times \{(\alpha,\beta)\}$. Since we assume the set of IK solutions is finite, the fiber of the projection to $\PS^1\times \PS^1$ which is $X(\alpha,\beta)$ is also finite.  So the canonical projection of the
$\mathcal W$ to $\PS^7$ contains a surface (two-dimensional). %Harris Cor. 11.13, Ex. 11.15
 
This 2-dimensional projection has a non-empty intersection with the 6-dimensional Study quadric $S$ \cite[Chapter~I. Theorem~7.2]{harts}. In particular, this intersection contains a curve in $S$. %Harris  
However, this intersection yields an infinite subset of the set of solutions to the IK problem of the manipulator, 
a contradiction.
\end{proof}

The above proof makes use of an argument involving dimensions of varieties. A discussion on dimensions of varieties can be found in many introductory books in 
algebraic geometry (see for instance \cite[\S11]{harris}).  

\begin{remark}\label{rem:step2}
We actually require that the projection of $\mathcal W$ in the proof of Proposition~\ref{prop:finite} does not lie in the Study quadric. This is used in \ref{step2} where we chose a suitable seven out of eight linear forms. 
\end{remark}

The following discussion shows our procedure for solving the IK based on the HuPf algorithm (we assume throughout that there is at least one real IK solution to a given end-effector pose):

\begin{none}\label{step1}
Consider eight linear forms describing $T(u)$ and $T(w)$ as polynomials in eight variables (i.e.\ $x_0,x_1,\dots,y_3$) with coefficients in $\C[u,w]$. The coefficient matrix of this system of equations with entries in $\C[u,w]$ must be non-singular for the procedure to work.
If the coefficient matrix is singular then, by the proof of Proposition~\ref{prop:finite}, there will be infinite solutions to the IK problem. Hence, we only proceed when it is nonsingular.

\vspace{3mm}
We choose seven out of the eight linear forms and regard their vanishing points as hyperplanes in $\PS^7(\C(u,w))$. 
Using linear algebra, we can easily solve for the intersection of these hyperplanes in $\PS^7(\C(u,w))$. We 
denote this point $P(u,w)$ with coordinates 
$$P(u,w):=(x_0(u,w):\cdots:y_3(u,w))$$
We may clear denominators and assume that the coordinates are in $\C[u,w]$ and not all are 
zero. The point $P$ will eventually give us all solutions to the pose of $F_4$ for the given EE 
transformation. 
\end{none}

\begin{none}\label{step2}
We now substitute the coordinates of $P$ into the quadratic form defining $S$. We obtain a bivariate polynomial 
$$f(u,w) := \sum_{i=0}^3 x_i(u,w)y_i(u,w)$$
in $\R[u,w]$. If this polynomial is identical to $0$ then we will choose another seven linear form in \ref{step1} and recompute $P(u,w)$ and again substitute in $S$. We assume we can find 
seven linear forms for which the polynomial is not identical to $0$ 
(see Remark~\ref{rem:step2}), otherwise one would need a different algorithm to solve the inverse 
kinematics problem (this possibility was handled by \cite{pfurner} for 6R manipulators). In 
this case, one may still have finite solutions for the IK problem. Hence, we may 
assume that $f(u,w)$ is non-zero.

Substituting the coordinates of $P$ into the unaccounted linear form (recall we have $8$ linear forms from $T(u)$ and $
T(v)$ and we only used $7$ of them to find $P$) should yield a non-constant 
$g(u,w)\in \R[u,w]$. The reason why $g$ is non-constant is because we have an IK solution and a constant 
$g$ would imply that $g\equiv 0$. However, $g\equiv 0$ implies that the eight hyperplanes from $T(u)$ and $T(w)$ 
are not in general position and this contradicts the conclusion of Proposition~\ref{prop:finite}. But this case was 
already eliminated in \ref{step1}.
\end{none}

\begin{none}\label{step3}
Take the resultant (see \cite{cox}) of $f(u,w),g(u,w)\in \R[u,w]$ from \ref{step2} by viewing them as multivariate polynomials over 
the ring $\R[u]$. The resultant will thus itself be a polynomial $r(u)\in \R[u]$. Since we are only considering 
solutions to the IK problem where the rotations are not an odd multiple of $\pi$, the solution to the joint variables 
$u$ and $w$ will be in the intersection of the plane curves defined by $f$ and $g$. The resultant $r$ cannot be 
identical to $0$ because we have eliminated this possibility in \ref{step1} and \ref{step2}. Finally, the resultant 
is not a non-zero constant because we know there is a solution to the IK problem. 

The finite number of roots of $r$ (values for $u$ only) will yield the possible values for the joint variable $u$. In applications, only the real roots of 
$r$ are of interest. Substituting these real roots to $f$ and $g$ gives us pairs of polynomials in $\R[w]$ and their 
common real root will give us possible values for $w$ for a given real value for $u$. Thus we obtain all the real 
intersections of the plane curves defined by $f$ and $g$ respectively. 

The corresponding $P(u,w)$ for joint variables $u$ and $w$ need to be in $\SE$. Namely, we discard all 
values from the possible pairs $(u,w)$ such that 
$$x_0(u,w)=x_1(u,w)=x_2(u,w)=x_3(u,w)=0$$
\end{none}

\begin{none}
The points $P(u,w)$ computed so far are points in the intersection of the left and right workspaces for which the left 
and right chains may coincide at frame $F_4$. The pairs $(u,w)$ comprise two of the 6 joint variables needed for the 
IK problem. For each $(u,w)$ solved in \ref{step3}, the other four joint values on the left and the right chain can be computed as follows: 
\begin{enumerate}[1.]
\item We first solve the unknown joint value of the left chain that is not $v_2$. Say, 
if the unknown joint is $d_3$ (i.e. $u=v_1$), then choose one linear form from $T(d_3)$ 
and solve for $d_3$ by substituting $P(u,w)$ (if unknown joint variable is $v_1$ then we choose linear form 
from $T(v_1)$).
\item To solve for $v_2$, we need to first determine $T(v_2)$ (i.e. linear space parametrized by $v_2$ containing the left workspace $V_L$). We perform the following steps:
%jcapco now
\begin{enumerate}[(a)]
  \item With the given DH-parameters, compute the Study parameters of $V_L$ i.e.\, compute
$$\sigma_1(v_1)\sigma_2(v_2)\sigma_3(d_3)$$
considered as an element in $\PS^7(\C(v_1,v_2,d_3))$ where each coordinates are polynomials in $\C[v_1,v_2,d_3]$. Suppose the Study parameters are 
$$(s_0(v_1,v_2,d_3):s_1(v_1,v_2,d_3):\cdots: s_7(v_1,v_2,d_3)).$$
\item Substitute the Study parameters ($s_0,s_1,\dots, s_7$) into 
\begin{equation}\label{eq:Tv2}
(a+iv_2)x_0+(b+jv_2)x_1+(c+kv_2)x_2+\cdots +(g+ov_2)y_2+(h+pv_2)y_3
\end{equation}
and rewrite as a polynomial in $v_1,v_2$ and $d_3$ with coefficients $z_i$, $i=1,2,\ldots,12$.
  \item Create a $12\times 16$ matrix $B$ where each row $i$ consists of the coefficients of $a,b,\ldots,p$ in $z_i$.
  \item Determine an element of $\ker(B)$ such that when it is substituted 
  (i.e. $a,b,\dots, p$) in equation \eqref{eq:Tv2} one obtains a linear form parametrized by $v_2$ (i.e. one of the $i,j,\dots, p$ is non-zero). Such an element of $\ker(B)$ exists otherwise there are infinite solutions to $v_2$ and we assumed only finite solutions to the IK problem.
  \item Substitute $P(u,w)$ to the obtained linear form and solve for $v_2$.
\end{enumerate}
\item The above steps can be done analogously for finding the unknown joint values of the right chain wherein in Item 1 we solve the joint value that is not $v_5$ and in Item 2 we compute the Study parameters of $V_R$ and substitute them into an equation similar to \eqref{eq:Tv2} but linear in $v_5$. Thus one obtains a linear form parametrized by $v_5$.
\end{enumerate}
	
\end{none}

\begin{none}
	Finally, substitute all possible real IK solutions, e.g. $(v_1,v_2,d_3,v_4,v_5,v_6)$ for the 2RP3R, into $\sigma_1\sigma_2\cdots\sigma_6$. The real IK solutions are the scalar multiples of $\sigma_E.$
\end{none}

Analogous procedures in this section can be applied for solving the IK of 2R2P2R, 3RP2R and 6R manipulators. For instance, for the 2R2P2R manipulator one solves joint values $(u,v)$ on the left and right chain, respectively. In particular, on the right chain, one chooses a linear form from $T(d_4)$ if $v=v_6$ and solves for $d_4$. Thus, the IK solutions are values for $(v_1,v_2,d_3,d_4,v_5,v_6)$.

\section{Examples}

Consider a 2RP3R manipulator with DH-parameters given in Table~\ref{tab:2rp3rDH}.
The parametrized linear spaces $T(v_1)$ and $T(v_6)$ are not in the Study quadric.   

\begin{table}[htbp!] \centering
	\begin{tabular}{|c|c|c|c|c|}
		\hline
		$i$ & $\theta_i$(deg) & $d_i$ & $a_i$ & $\alpha_i$(deg)\\
		\hline
		1   & * & 0  & 0.1 & 90 \\ 
		2   & * & 0  & $-$0.425 & 0 \\
		3   & $0$ & *    & $-$0.39225 & 0  \\
		4   & * & 0.10915  & 0.01 & 90 \\
		5   & * & 0.09465  & 0 & $-$90 \\
		6   & * & 0 & 0 & 0 \\
		\hline
	\end{tabular}	
	\caption{DH-parameters}
  \label{tab:2rp3rDH}
\end{table}

\vspace{3mm}\noindent
Take a desired pose of the end-effector given by
$$(190.335, 213.413, 9.36544, 164.774, -35.3968, -32.883, 74.4773,79.2444).$$ 
Since $l_2\neq\pm1$, choose $T(v_1)$ for the left chain. Since $a_5=0$ we have $T(v_4)\subset S$, hence we choose 
$T(v_6)$ for the right chain. We obtain 4 real solutions to the IK problem as shown in Table~\ref{tab:sols1} and Figure~\ref{fig1}. These solutions were obtained using Mathematica (codes are available in~\cite{link}).

\begin{table}[!ht] \centering 
	\begin{tabular}{|c|c|c|c|c|}
		\hline
		& Solution 1 & Solution 2 & Solution 3 & Solution 4\\ 
		\hline
		$\theta_1$ &$-$16.1819 & 40.9555 & 60 & 79.2813 \\
		$\theta_2$ &$-$70.8614 & $-$58.2515 & $-$70 & $-$71.455 \\
		$d_3$ &0.0810177 & $-$0.123834 & $-$0.2 & $-$0.266949 \\
		$\theta_4$ &81.6927 & 133.782 & 40 & 55.7351 \\
		$\theta_5$ &$-$60.0253 & $-$9.67835 & 19 & 36.9289 \\
		$\theta_6$ &32.9097 & $-$36.9601 & 67 & 51.0502 \\
		\hline
	\end{tabular}\normalsize  
	\caption{Real inverse kinematics solutions to the given 2RP3R manipulator} \label{tab:sols1}
\end{table}

\begin{figure}[htbp!]\centering
\subfloat[Solution 1]{\fbox{\includegraphics[width=45mm]{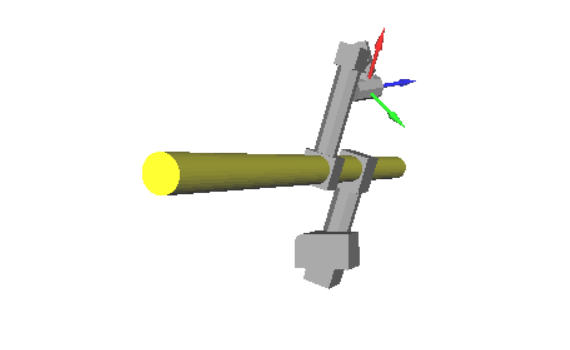}}}\hspace{1mm}
\subfloat[Solution 2]{\fbox{\includegraphics[width=45mm]{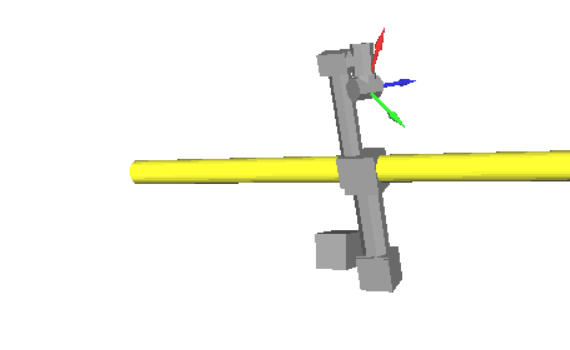}}}\\[-1.5mm]
\subfloat[Solution 3]{\fbox{\includegraphics[width=45mm]{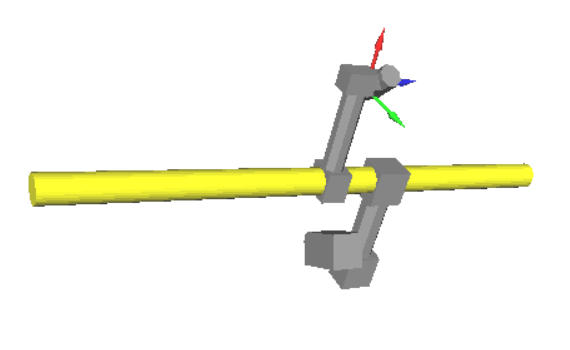}}}\hspace{1mm}
\subfloat[Solution 4]{\fbox{\includegraphics[width=45mm]{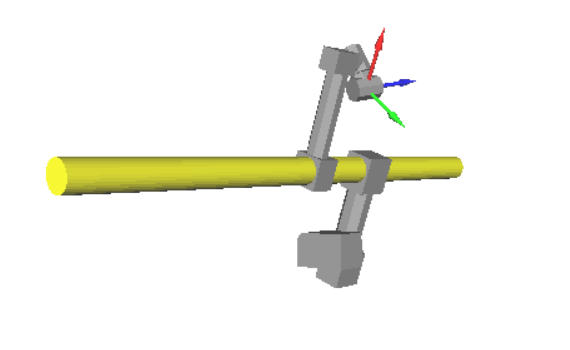}}}
\caption{Illustration of IK solutions to the 2RP3R manipulator}
\label{fig1}
\end{figure}

\noindent Consider a 2R2P2R manipulator with given DH parameters in Table~\ref{tab:2r2p2r} and EE pose
$$(-5.37543, 64.9811, -75.9243, 69.384, -59.0113, 6.15132, -22.5377, -34.995).$$

\begin{table}[ht]\centering
	\begin{tabular}{|c|c|c|c|c|}
		\hline
		$i$ & $\theta_i$(deg) & $d_i$ & $a_i$ & $\alpha_i$(deg) \\
		\hline
		1   & * & 0  & 0.2 & 23  \\
		2   & * & 0.3  & 0.2 & 23  \\
		3   & $-$45 & *    & 0.3 & 45  \\
		4   & 71  & *  & 0.4 & 35  \\
		5   & * & 0.3  & 0 & 20  \\
		6   & * & 0 & 0 & 0 \\
		\hline
	\end{tabular}  
	\caption{DH-parameters }
  \label{tab:2r2p2r}
\end{table}
 \noindent The IK solutions can be computed from $T(v_1)$ and $T(v_6)$ since both do not lie in $S$. Four real solutions are obtained and are shown in Table~\ref{tab:solutionsO} and Figure~\ref{fig2}. 

\begin{table}[ht!] \centering 
	\begin{tabular}{|c|c|c|c|c|}
		\hline
		& Solution 1 & Solution 2 & Solution 3 & Solution 4 \\
		\hline
		$\theta_1$& $-$4.2843 & 10. & 104.328 & 128.106 \\
		$\theta_2$&70.5556 & 20. & $-$64.1116 & $-$72.0589 \\
		$d_3$  &$-$0.378641 & 0.1 & 0.0119017 & $-$0.33362 \\
		$d_4$  &0.639115 & $-$0.1 & 0.579609 & 1.03798 \\
		$\theta_5$&$-$110.876 & 31. & 10.0365 & $-$35.978 \\
		$\theta_6$&$-$167.772 & 55. & 107.717 & 157.911 \\
		\hline  
		
	\end{tabular} \normalsize 
	\caption{Real inverse kinematics solutions to the given 2R2P2R manipulator  }\label{tab:solutionsO}
\end{table}

\begin{figure}[!htbp]\centering
\subfloat[Solution 1]{\fbox{\includegraphics[width=5cm]{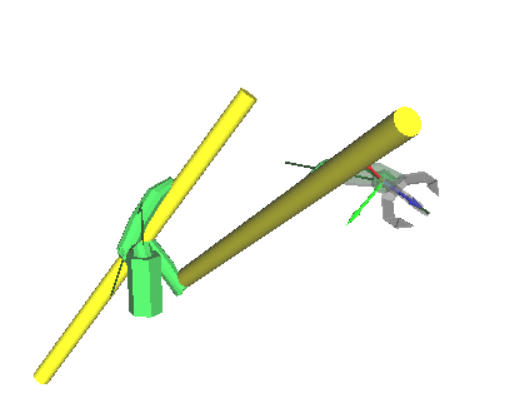}}}\hspace{1mm}
\subfloat[Solution 2]{\fbox{\includegraphics[width=5cm]{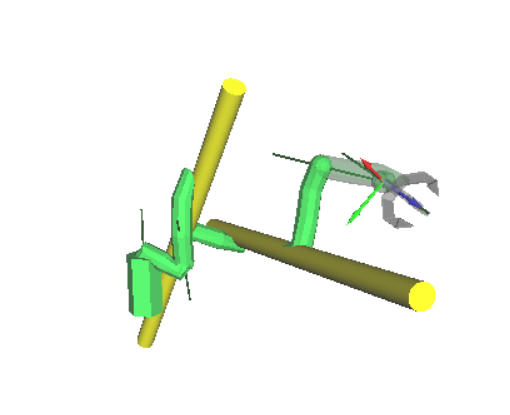}}}\\[-1.5mm]
\subfloat[Solution 3]{\fbox{\includegraphics[width=5cm]{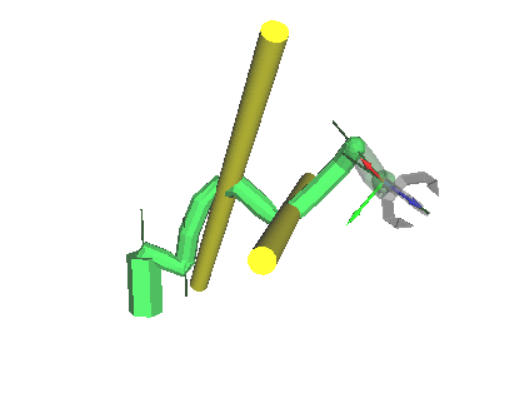}}}\hspace{1mm}
\subfloat[Solution 4]{\fbox{\includegraphics[width=5cm]{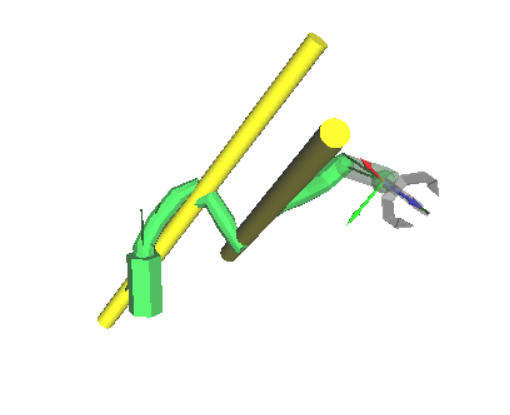}}}
\caption{Illustration of IK solutions to the 2R2P2R manipulator}
\label{fig2}
\end{figure}

\section{Conclusion}
A 2RP3R manipulator can be described as a composition of two rotations followed by a 
translation and three rotations in space. Computing the constraint varieties of such manipulators can be greatly 
simplified by considering the two 3-subchains namely 2RP and 3R. The parametrized linear spaces (linear sections of a 
Segre variety) describing the workspaces of these 3-chains are the key to the solution of the IK problem. 
Computing the intersection of a quasi-projective variety (identified with $\SE$) with the projection of a linear section of Segre varieties using elimination theory and linear algebra allows us to solve the 
IK problem algebraically.

This algorithm can be applied to all general 6R/P manipulators with the same 3-subchain types (one takes the reverse 
joint type for the right 3-chain), namely: 2RP, 3R. Our algorithm differs from the original HuPf algorithm because it accounts 
for prismatic joints, and we provide a systematic and efficient way (via dual quaternions) of finding and choosing the proper parametrized linear spaces. Note that in this paper we only considered four different 6R/P manipulators. The methods discussed here can be modified so that they are applicable to other 6R/P manipulators, and they will be dealt with in a forthcoming paper. 

\section*{Acknowledgements}

J.~Capco was supported and funded by the Austrian Science Fund (FWF): Project P28349-N32 and W1214-N14 Project DK9.  
M.J.C.~Loquias and S.M.M.~Manongsong acknowledges the Office of the Chancellor of the University of the Philippines 
Diliman, through the Office of the Vice Chancellor for Research and Development, for funding support through the Outright Research Grant.

\end{document}